\documentclass[12pt]{amsart}
\usepackage{hyperref}
\usepackage[margin=1.14in]{geometry} 
\usepackage{lipsum}
\usepackage{amsmath,amsthm,amssymb}
\usepackage{setspace}
\usepackage{algorithm}
\usepackage{graphicx}
\usepackage[numbers,round]{natbib}
 \usepackage{thmtools,thm-restate}
\usepackage{natbib}
\usepackage{amsmath}
\usepackage{amssymb}
\usepackage{amsfonts}
\usepackage{amsthm}
\usepackage{booktabs}
\usepackage{algorithm}
\usepackage{algpseudocode}   
\usepackage{algpseudocode}
\usepackage{paralist}       
\usepackage{float}          
\usepackage{stfloats}
\usepackage{paralist}       
\usepackage{float}          
\usepackage{stfloats}

\newtheorem{thm}{Theorem}
\newtheorem*{thm*}{Theorem}
\newtheorem{lem}[thm]{Lemma}
\newtheorem*{lem*}{Lemma}
\newtheorem{deff}[thm]{Definition}

\newtheorem{rmk}{Remark}

\newtheorem{expl}[thm]{Example}

\newtheorem{cor}[thm]{Corollary}
\newtheorem*{cor*}{Corollary}
\newtheorem{prob*}{Problem}

\newcommand{\R}{\mathbb{R}}

\newcommand{\Norm}[1]{\left\Vert #1 \right\Vert}

\date{}

\hypersetup{
    colorlinks=true,
    linkcolor=blue,
    filecolor=magenta,      
    urlcolor=cyan,
}

\begin{document}
\title[Parameter-Efficient Generative Modeling with Controlled Vector Fields]{Parameter-Efficient Generative Modeling with Controlled Vector Fields}
\author{Peyman Morteza}
\address{Department of Computer Sciences, University of Wisconsin, Madison, 
WI, 53706}
\email{peyman@cs.wisc.edu}
\begin{abstract}
We introduce a continuous-time generative modeling framework, motivated by the
Chow--Rashevskii theorem, that builds expressive flows from a small set of fixed
vector fields and learned scalar controls. Instead of learning an unconstrained
high-dimensional vector field, our framework constructs the velocity by
modulating fixed vector fields with learned scalar control functions. When the
fixed fields are bracket-generating, their Lie algebra spans the ambient space,
providing a mechanism for expressive transport with only a small number of
learned control channels and offering a parameter-efficient geometric
alternative to standard vector-field parameterizations. This decoupled formulation yields a structured and interpretable generative
model in which the number of learned scalar output channels can be chosen
independently of the ambient dimension. We formulate an expressivity principle showing
that, under suitable controllability and well-posedness assumptions, such
controlled flows can transport a source distribution to a target distribution.
We train the resulting model using a continuous-normalizing-flow likelihood
objective and present proof-of-concept experiments on synthetic distributions.
\end{abstract}
\maketitle
\setcounter{tocdepth}{1}
 \tableofcontents
\vspace{-10mm}
\section{Introduction}
Generative models form a cornerstone of modern machine learning and AI, driving advances in areas such as image synthesis, language modeling, and structured data generation \citep{brown2020language,ho2020denoising}. Among the most successful approaches are flow-based and diffusion models, which have shown remarkable capability in capturing complex, high-dimensional data distributions \citep{dinh2016density,song2020score,papamakarios2021normalizing,lipman2022flow}.

In this note, we investigate a continuous-time generative modeling framework
that builds expressive flows from a small set of fixed vector fields and
learned scalar controls. The key idea is to replace an unconstrained
high-dimensional velocity field with a controlled dynamical system whose
underlying vector fields are fixed and whose scalar coefficients are learned. Unlike conventional flow-based models that directly learn high-dimensional vector-valued fields, our approach learns only scalar coefficients over predefined geometric directions (see Lemma \ref{lem:main}). Leveraging the Chow–Rashevskii theorem, we formulate an expressivity principle showing that, under suitable
admissibility, controllability, and well-posedness assumptions, such controlled
flows can transport a source distribution to a target distribution (see Corollary \ref{cor:push}), even when restricted to a limited number of control directions.

Intuitively, the Lie algebra generated by the fixed vector fields spans the full tangent space, enabling the system to reach any state through compositions of a few base directions. This results in a compact, parameter-efficient, and structurally interpretable
generative framework: the fixed vector fields provide the underlying geometric
structure, while the learned scalar controls modulate the dynamics with a small
number of output channels. By structural interpretability, we mean that the generative process is explicitly constructed from a fixed, known set of vector fields, while the learned scalar control functions modulate these directions over time. We instantiate the framework with lightweight MLPs that parameterize the scalar control functions (see Algorithm~\ref{alg:cnf-chowflow}) and evaluate its performance on standard synthetic density benchmarks.

The remainder of this note is organized as follows.
Section~\ref{sec:background} recalls the necessary preliminaries on Lie brackets,
the Chow--Rashevskii theorem, and control systems. Section~\ref{sec:chowflow}
introduces the ChowFlow framework, which replaces unconstrained vector-field
parameterizations with learned scalar controls over fixed vector fields satisfying
a bracket-generating condition. Section~\ref{sec:experiments} presents
proof-of-concept experiments on synthetic distributions. Section~\ref{sec:proofs}
provides detailed proofs of the theoretical results, and Section~\ref{sec:exper} provides additional
experimental details.

\section{Preliminaries}

\label{sec:background}
We provide a brief overview of Lie brackets, the Chow–Rashevskii theorem, and control systems. Detailed expositions can be found in standard references, e.g., \cite{warner1983foundations,jurdjevic1997geometric,montgomery2002tour}.

\begin{deff}[Lie Bracket]
The Lie bracket of two smooth vector fields \( X \) and \( Y \) on a smooth manifold \( M \), denoted \( [X, Y] \), is a smooth vector field defined by,
\[
[X, Y](f) = X(Y(f)) - Y(X(f)),
\]
for any smooth function \( f: M \to \mathbb{R} \), where \( X(f) \) and \( Y(f) \) denote the action of the vector fields on \( f \). 
\end{deff}
\begin{deff}[Left-Invariant Vector Field]
Let \( G \) be a Lie group, and let \( L_g: G \to G \) denote left multiplication by \( g \in G \), i.e., \( L_g(h) = gh \). A smooth vector field \( X \in \mathfrak{X}(G) \) is called \emph{left-invariant} if it is preserved under all left translations,
\[
(dL_g)_h (X(h)) = X(gh), \quad \text{for all } g,h \in G.
\]
Equivalently, \( X \) is completely determined by its value at the identity element \( e \in G \), and extended to all of \( G \) by left translation,
\[
X(g) = (dL_g)_e (X(e)).
\]
The space of left-invariant vector fields is naturally identified with the Lie algebra $T_{e}G$ of $G$.
\end{deff}
\begin{deff}[Control System]
A control system on a smooth manifold \( M \) is a dynamical system of the form,
\[
\dot{x}(t) = \sum_{i=1}^k u_i(t, x(t)) X_i(x(t)),
\]
where \( \{X_1, X_2, \dots, X_k\} \) are smooth vector fields on \( M \), and \( u_i: [0, T] \times M \to \mathbb{R} \) , $1\le i\le k$, are scalar control inputs, or feedback control laws, depending on time and
state. Here, $k$ denotes the number of control channels. When classical ODE solutions are considered,
we assume the \(u_i\) are sufficiently regular to ensure well-posedness.
\end{deff}

\begin{thm}[Chow–Rashevskii Theorem {\citep{chow1940systeme,rashevsky1938connecting}}]
\label{thm:chow}
Let $M$ be a connected smooth manifold, and let $\Delta = \mathrm{span}\{X_1,\dots,X_k\}$ be a distribution generated by smooth vector fields. If the Lie algebra generated by $\{X_1,\dots,X_k\}$ spans the entire tangent space $T_xM$ at every $x \in M$, i.e.,
\[
\mathrm{Lie}\{X_1,\dots,X_k\}(x) = T_x M \qquad \forall x \in M,
\]
then any two points $p,q \in M$ can be connected by a piecewise smooth, absolutely continuous curve $\gamma : [0,1] \to M$ satisfying,
\[
\gamma(0)=p, \qquad \gamma(1)=q, \qquad \dot{\gamma}(t) \in \Delta_{\gamma(t)},
\]
for almost every $t\in[0,1]$.
\end{thm}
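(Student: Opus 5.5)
The plan is the classical orbit argument. For $p\in M$, let $\mathcal{A}(p)$ be the set of points reachable from $p$ by piecewise smooth, absolutely continuous curves whose velocity lies a.e.\ in $\Delta$. It suffices to use concatenations of flow segments $t\mapsto e^{tX_i}(x)$ with $t$ of either sign. Such concatenations are admissible: each piece is smooth with $\dot\gamma=\pm X_i\in\Delta$, and the curve can be reparametrized to live on $[0,1]$. Admissibility is preserved under concatenation and reversal, so ``$q\in\mathcal{A}(p)$'' is an equivalence relation on $M$. Since $M$ is connected, it is enough to show that every equivalence class is open. The complement of a class is then a union of open classes, so each class is also closed, and hence equals $M$.

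To show openness, I fix $x\in M$ and prove that $\mathcal{A}(x)$ contains a neighbourhood of $x$; applied at every point of a class, this makes the class open. Consider compositions
\[
F(t_1,\dots,t_m)=e^{t_mX_{i_m}}\circ\cdots\circ e^{t_1X_{i_1}}(x),
\]
defined for small $t$, whose images lie in $\mathcal{A}(x)$. I argue by a maximal-rank construction (Krener / Sussmann style). Choose $m$ and indices so that the map $F$ restricted to some open set $U$ of parameters (near, but not at, the origin) has maximal constant rank $r$. Its image is then a locally embedded $r$-dimensional submanifold $N$. Maximality forces every $X_j$ to be tangent to $N$ at points of $F(U)$: otherwise, appending one more flow $e^{sX_j}$ would raise the rank. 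Vector fields tangent to a submanifold have brackets tangent to it, so $\mathrm{Lie}\{X_i\}(y)\subset T_yN$. The hypothesis $\mathrm{Lie}\{X_i\}(y)=T_yM$ then gives $r=\dim M$, so $F$ is a submersion at some parameter $\tau\in U$. Consequently $F$ maps a neighbourhood of $\tau$ onto an open set $W\ni F(\tau)$.

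The main obstacle is that $W$ need not contain $x$ itself, because $\tau\neq 0$. I handle this with the reversal trick. Set $y=F(\tau)$ and let $G$ be the reversed path from $y$ back to $x$, that is, $G=e^{-\tau_1X_{i_1}}\circ\cdots\circ e^{-\tau_mX_{i_m}}$. This $G$ is a diffeomorphism near $y$ sending $y$ to $x$. Hence $G(W)$ is an open neighbourhood of $x$, and every point of it is reached from $x$ by going forward to $W$ and then applying $G$. So $G(W)\subset\mathcal{A}(x)$.

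A secondary technical point is completeness. The flows need only be defined locally, so all compositions are taken with small times inside a chart. This suffices because openness is a local property, and the global statement follows by the connectedness argument above.
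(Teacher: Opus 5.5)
The paper does not prove Theorem~\ref{thm:chow}. It quotes the result from Chow and Rashevskii and points to the texts cited at the start of Section~\ref{sec:background}, so your proposal has to stand on its own, and it does. It is the standard maximal-rank (orbit) argument, and the details you leave implicit check out:
\begin{enumerate}
\item Maximize the rank over all $m$, all index tuples and all parameters in the open natural domain of the composed local flows. Appending a zero-time segment keeps you in that domain, so the rank-raising contradiction at $(\tau',0)$ is legitimate.
\item Shrink $U$ so that $F(U)$ is embedded.
\item Shrink $W$ into the domain of $G$. In effect, $G\circ F$ is then a submersion at $\tau$ with $G\circ F(\tau)=x$.
\end{enumerate}

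The other classical route builds bracket directions from commutators of flows. In a chart, $e^{-sX_j}\circ e^{-sX_i}\circ e^{sX_j}\circ e^{sX_i}$ moves $x$ by $s^2[X_i,X_j](x)$, up to sign and an $O(s^3)$ error. This is iterated for higher brackets, and an inverse-function or degree argument finishes the proof. That route is more computational but quantitative: points at chart distance $\varepsilon$ from $x$ are reached by admissible curves of length $O(\varepsilon^{1/r})$, where $r$ is the step at $x$.

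Your route is shorter and needs no expansions or bookkeeping of bracket orders, but it gives no such rate. It uses time reversal only at the very end, so it adapts directly to positive-time accessibility statements (Krener's theorem), where one obtains reachable sets with nonempty interior.

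It also pays a dividend specific to this paper. Let $F_{x'}$ denote your composition started at $x'$. The map $(x',t)\mapsto\bigl(x',G(F_{x'}(t))\bigr)$ is a submersion at $(x,\tau)$, and the implicit function theorem then gives admissible bridges depending smoothly on both endpoints near each diagonal point $(x,x)$. This is a natural first step toward the measurable selection $\Gamma$ required in Definition~\ref{deff:path-bridge}. Remark~\ref{rmk:path-bridge-chow-condition} does not address that requirement, since Theorem~\ref{thm:chow} by itself only supplies endpoint connectivity.
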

\begin{rmk}[Control-theoretic interpretation]
\label{rmk:control_reachability}
The Chow--Rashevskii theorem can be interpreted as a controllability statement:
when the vector fields \(X_1,\ldots,X_k\) are bracket-generating, admissible
curves tangent to their span can connect any two points in the same connected
component of \(M\). In our setting, this motivates the use of a small set of
fixed bracket-generating vector fields as a geometric basis for constructing
continuous-time generative flows.
\end{rmk}
\section{ChowFlow}
\label{sec:chowflow}

In this section, we show how the Chow--Rashevskii theorem motivates a
parameter-efficient class of structured continuous-time generative models. Our key observation is that viewing generative flows through the lens of the
Chow--Rashevskii theorem leads to a parameter-efficient velocity
parameterization: rather than outputting a full \(d\)-dimensional vector field,
the model learns scalar controls over a small set of fixed bracket-generating
vector fields. 
Specifically, expressive generative behavior can be achieved using only a small number of scalar control functions, provided that the underlying vector fields are appropriately chosen and satisfy the bracket-generating condition of Theorem~\ref{thm:chow}. 
This perspective offers a structured way to steer the generative process through a low-dimensional set of learnable coefficients, effectively reusing a shared geometric backbone to produce complex transformations. 
This formulation motivates the construction presented in Algorithm~\ref{alg:cnf-chowflow}.

\subsection{Geometric Control and Controlled Flows}

We begin by establishing the mathematical foundations of the framework. As stated in Theorem~\ref{thm:chow}, under mild regularity conditions, a suitable control system can steer any initial point to an arbitrary target point. 
Let $\mu_0$ and $\mu_1$ be two probability measures on $(M,g)$, and let $\pi \in \Pi(\mu_0,\mu_1)$ be a coupling between them.

\begin{deff}[Class of path bridges]
\label{deff:path-bridge}
Let $\mathcal{F}$ be a class of absolutely continuous curves $\gamma : [0,1] \to M$. We say that $\mathcal{F}$ is a \emph{class of path bridges} if it satisfies the following properties:

\begin{enumerate}
\item \textbf{Endpoint connectivity:} for every pair $(x_0,x_1)\in M \times M$, there exists a path $\gamma^{x_0,x_1}\in \mathcal{F}$ such that
$
\gamma^{x_0,x_1}(0)=x_0, \gamma^{x_0,x_1}(1)=x_1.
$

\item \textbf{Measurable selection:} there exists a measurable map,
\begin{align*}
&\Gamma : M \times M \to \mathcal{F},\\
&(x_0,x_1)\mapsto \gamma^{x_0,x_1},
\end{align*}
such that for every $(x_0,x_1)\in M \times M$, the selected path $\gamma^{x_0,x_1}:[0,1] \to M$ belongs to $\mathcal{F}$ and satisfies
$
\gamma^{x_0,x_1}(0)=x_0,\gamma^{x_0,x_1}(1)=x_1,
$ where \(\mathcal F\) is understood with its natural measurable structure as a
class of paths.
\end{enumerate}
\end{deff}
\begin{rmk}
\label{rmk:path-bridge-chow-condition}
Let \((M,g)\) be a Riemannian manifold, and let
\(X_1,\ldots,X_k\in \mathfrak{X}(M)\) be smooth vector fields satisfying the
Chow--Rashevskii bracket-generating condition, as in
Theorem~\ref{thm:chow}. Then the Chow--Rashevskii theorem gives rise to a
natural class of path bridges associated with
\(X_1,\ldots,X_k\). Namely, one may take \(\mathcal F\) to be the class of
absolutely continuous curves \(\gamma:[0,1]\to M\) such that,
\[
    \dot\gamma(t)
    \in
    \operatorname{span}
    \{X_1(\gamma(t)),\ldots,X_k(\gamma(t))\},
\]
for almost every \(t\in[0,1]\). Under the bracket-generating assumption, any
two points in the same connected component of \(M\) can be connected by a curve
in \(\mathcal F\).
\end{rmk}
\begin{rmk}[Random interpolating path]
\label{rmk:random-interpolating-path}
Given a class of path-bridges $\mathcal{F}$, we can construct a random interpolation path between $\mu_0$ and $\mu_1$. Let $(Z_0,Z_1)$ be a random pair with law $\pi$, and define the random path,
$
Z_t := \gamma_t^{Z_0,Z_1} = \Gamma(Z_0,Z_1)(t), t \in [0,1].
$
For each $t \in [0,1]$, let $\mu_t$ denote the law of $Z_t$:
$
\mu_t := \mathrm{Law}(Z_t).
$
By construction,
$
\mu_t
=
(\mathrm{ev}_t \circ \Gamma)_\# \pi,
$
where $\mathrm{ev}_t(\gamma)=\gamma(t)$ is the evaluation map at time $t$. In particular,
$
\mu_0 = \mathrm{Law}(Z_0),
\mu_1 = \mathrm{Law}(Z_1),
$
so the family $(\mu_t)_{t \in [0,1]}$ is an interpolation between $\mu_0$ and $\mu_1$.
\end{rmk}
The following bridge-to-velocity construction is inspired by the
flow-matching viewpoint~\cite{lipman2022flow}: a random family of bridges
induces an Eulerian velocity field through conditional averaging. More
precisely, given a class of path bridges \(\mathcal F\), let
\((Z_0,Z_1)\) be a random pair with law \(\pi\), and define the random path,
\[
    Z_t:=\gamma_t^{Z_0,Z_1}
    =
    \Gamma(Z_0,Z_1)(t),
    \qquad t\in[0,1].
\]
The associated Eulerian velocity field is defined by,
\[
    v_t(x)
    =
    \mathbb E\!\left[
        \dot\gamma_t^{Z_0,Z_1}
        \,\middle|\,
        \gamma_t^{Z_0,Z_1}=x
    \right].
\]
When the path bridges are associated to a bracket-generating control
system (See Remark \ref{rmk:path-bridge-chow-condition}), the resulting Eulerian velocity admits a feedback-control
representation. Related
flow-matching and continuity-equation formulations for control-affine systems
have been studied in~\cite{elamvazhuthi2025flow,caponigro2025transport}. We use
this viewpoint to motivate the controlled-flow parameterization below.
\begin{thm}
\label{thm:general-matching}
Let $\mu_0$ and $\mu_1$ be probability measures on $(M,g)$, and let
$
\pi \in \Pi(\mu_0,\mu_1)
$
be a coupling. Let $\mathcal{F}$ be a class of path bridges as in Definition \ref{deff:path-bridge}.
Let $(Z_0,Z_1)$ be a random pair with law $\pi$, and define the random path
$
Z_t := \gamma_t^{Z_0,Z_1}, t\in[0,1]
$ (See Remark \ref{rmk:random-interpolating-path}).
Let
$
\mu_t := \mathrm{Law}(Z_t),
$
and $
v_t(x)
=
\mathbb{E}\big[\dot\gamma_t^{Z_0,Z_1}\mid \gamma_t^{Z_0,Z_1}=x\big].
$
Under standard integrability/regularity assumptions (see Section~\ref{sec:proofs} for details), the following statements hold.

\begin{enumerate}
\item[(i)] The pair $(\mu_t,v_t)$ satisfies the continuity equation
$
\partial_t \mu_t + \mathrm{div}_g(\mu_t v_t)=0
$
in the weak sense, that is, for every $\varphi \in C_c^\infty(M)$ and for almost every $t\in[0,1]$,
$$
\frac{d}{dt}\int_{M}\varphi(x)\,d\mu_t(x)
=
\int_{M}d \varphi(v_t(x))d\mu_t(x)
=\int_{M}g_x(\nabla_g\varphi(x), v_t(x))d\mu_t(x).$$
\item[(ii)] Consider the ODE,
\begin{align*}
  &  \dot Y_t = v_t(Y_t),\\
  & Y_0=y,
\end{align*}
Assume that the time-dependent vector field \(v_t\) is regular enough to generate a unique
non-autonomous flow map \(\phi_{0,t}\), and that the associated continuity equation is uniquely solved
by the corresponding flow pushforward. For simplicity, write
\(\phi_t:=\phi_{0,t}\), then,
$$
\mu_t = (\phi_t)_\# \mu_0
\qquad \text{for all } t\in[0,1].
$$
In particular,
$
(\phi_1)_\# \mu_0 = \mu_1.
$
\end{enumerate}
\end{thm}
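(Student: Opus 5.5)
For part (i) the plan is the standard flow-matching computation: differentiate under the expectation and then condition on the current position. Fix \(\varphi\in C_c^\infty(M)\).

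By Remark~\ref{rmk:random-interpolating-path}, \(\int_M\varphi\,d\mu_t=\mathbb E[\varphi(\gamma_t^{Z_0,Z_1})]\). For each fixed pair \((x_0,x_1)\), the curve \(t\mapsto\gamma_t^{x_0,x_1}\) is absolutely continuous. Hence \(t\mapsto\varphi(\gamma_t^{x_0,x_1})\) is absolutely continuous, and
\[
\varphi(\gamma_t^{x_0,x_1})-\varphi(\gamma_s^{x_0,x_1})=\int_s^t d\varphi\big(\dot\gamma_r^{x_0,x_1}\big)\,dr .
\]
Joint measurability of \((r,x_0,x_1)\mapsto\dot\gamma_r^{x_0,x_1}\) follows from the measurable selection \(\Gamma\) in Definition~\ref{deff:path-bridge}. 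I will take as the ``standard integrability assumption'' that
\[
\mathbb E\int_0^1|\dot\gamma_r^{Z_0,Z_1}|_g\,dr<\infty .
\]
Since \(d\varphi\) is bounded, Fubini then lets me exchange expectation and time integral:
\[
\int\varphi\,d\mu_t-\int\varphi\,d\mu_s=\int_s^t\mathbb E\big[d\varphi_{Z_r}(\dot\gamma_r^{Z_0,Z_1})\big]\,dr .
\]
Next I apply the tower property, conditioning on \(Z_r=\gamma_r^{Z_0,Z_1}\). Because \(d\varphi_{Z_r}\) is \(\sigma(Z_r)\)-measurable and linear on the fibre, this gives
\[
\mathbb E\big[d\varphi_{Z_r}(\dot\gamma_r)\big]=\mathbb E\big[d\varphi_{Z_r}(v_r(Z_r))\big]=\int_M d\varphi(v_r)\,d\mu_r .
\]
The conditional expectation is defined as a vector in \(T_xM\). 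Concretely, it is computed in a local trivialization, or intrinsically as the disintegration of the joint law of \((Z_r,\dot Z_r)\) on \(TM\), which is \(\mu_r\)-a.e.\ well defined under the integrability assumption. The Lebesgue differentiation theorem then gives the identity for a.e.\ \(t\). The Riemannian form follows from \(d\varphi(v)=g(\nabla_g\varphi,v)\).

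For part (ii) the plan is to verify that the pushforward is also a solution and then invoke the assumed uniqueness. Set \(\nu_t:=(\phi_t)_\#\mu_0\). For \(\varphi\in C_c^\infty(M)\),
\[
\frac{d}{dt}\int\varphi\,d\nu_t=\frac{d}{dt}\int\varphi(\phi_t(y))\,d\mu_0(y)=\int d\varphi\big(v_t(\phi_t(y))\big)\,d\mu_0(y)=\int d\varphi(v_t)\,d\nu_t .
\]
This uses the chain rule along trajectories of \(\dot Y_t=v_t(Y_t)\) and dominated convergence, justified by boundedness of \(d\varphi\) and local integrability of \(|v_t|\) along the flow. So \(\nu_t\) solves the same weak continuity equation as \(\mu_t\) from part (i), with the same initial datum \(\nu_0=\mu_0\). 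By the hypothesis that the continuity equation with velocity \(v_t\) is uniquely solved by the flow pushforward, in the class containing both curves, we conclude \(\mu_t=\nu_t\) for all \(t\); evaluating at \(t=1\) gives \((\phi_1)_\#\mu_0=\mu_1\). A preliminary step is needed to apply uniqueness for all \(t\) rather than a.e.\ \(t\): both curves must be narrowly continuous. For \(\mu_t\) this follows from continuity of the paths and dominated convergence; for \(\nu_t\), from continuity of \(\phi_t\).

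I expect the main obstacle to be the conditioning step in (i) on a manifold. Conditional expectations of vectors lying in varying tangent spaces \(T_{Z_r}M\) must be made rigorous, and \(v_r\) must be a measurable vector field in \(L^1(\mu_r)\). I would handle this by disintegrating the joint law of \((Z_r,\dot Z_r)\) on \(TM\) over \(\mu_r\) and defining \(v_r(x)\) as the barycenter of the fibre measure in the vector space \(T_xM\). Linearity of \(d\varphi_x\) on \(T_xM\) makes the tower-property identity immediate from this definition.
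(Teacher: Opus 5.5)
Your proposal is correct and follows essentially the same route as the paper. For (i), both arguments use the chain rule along the absolutely continuous bridges, interchange expectation and time via $\mathbb E\int_0^1\|\dot Z_t\|_g\,dt<\infty$, and condition on $Z_t$ to replace $\dot Z_t$ by $v_t(Z_t)$; for (ii), both show that $\nu_t=(\phi_t)_\#\mu_0$ solves the same weak continuity equation with datum $\mu_0$ and then invoke the assumed uniqueness. Your extra steps only tighten points the paper treats briefly: integrating in time before applying Lebesgue differentiation, constructing $v_t$ as a fibrewise barycenter where the paper simply assumes a Borel field $v$ exists, and checking narrow continuity so that uniqueness gives equality for every $t$.
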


The next corollary specializes the preceding bridge-to-velocity construction to
admissible bridges generated by a bracket-generating control system.
\begin{cor}[Pushforward of Distributions via Controlled Flows]
\label{cor:push}
Let \((M,g)\) be a Riemannian manifold, and let
\(X_1,\ldots,X_k\in\mathfrak{X}(M)\) be smooth vector fields satisfying the
Chow--Rashevskii bracket-generating condition. Assume that the hypotheses of Theorem~\ref{thm:general-matching} hold for the
class of path bridges associated with \(X_1,\ldots,X_k\). There exist measurable
feedback control laws \(u_i(t,x)\), \(1\le i\le k\), such that the associated
dynamical system,
\[
    \dot{x}(t)
    =
    \sum_{i=1}^k u_i(t,x(t))\,X_i(x(t)),
\]
generates a flow \(\phi_t\) satisfying,
\[
    (\phi_1)_\#\mu_0=\mu_1.
\]
In particular, the initial measure \(\mu_0\) can be transported to the target
measure \(\mu_1\) by feedback controls depending only on \((t,x)\), rather
than explicitly on the initial point.
\end{cor}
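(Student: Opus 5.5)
The plan is to apply Theorem~\ref{thm:general-matching} to the class of path bridges $\mathcal F$ of Remark~\ref{rmk:path-bridge-chow-condition}, and then show that the resulting Eulerian velocity $v_t$ lies pointwise in $\operatorname{span}\{X_1,\dots,X_k\}$ with measurable coefficients. Concretely, I would fix the measurable selection $\Gamma$ of admissible bridges, which exists by the standing hypotheses. Its endpoint connectivity is supplied by Theorem~\ref{thm:chow}, and we assume $M$ is connected, or work componentwise. For each pair $(x_0,x_1)$ the selected curve $\gamma^{x_0,x_1}$ satisfies $\dot\gamma_t\in\Delta_{\gamma_t}$ for a.e.\ $t$. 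The first step is therefore to write $\dot\gamma^{x_0,x_1}_t=\sum_{i=1}^k a_i^{x_0,x_1}(t)\,X_i(\gamma_t^{x_0,x_1})$ with coefficients that are jointly measurable in $(t,x_0,x_1)$ and integrable under $\pi\otimes dt$.

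To get such coefficients canonically, I would use the Gram matrix $G(x)=\big(g_x(X_i(x),X_j(x))\big)_{ij}$ and its Moore--Penrose pseudoinverse $G(x)^+$. For $w\in\Delta_x$, set $a(x,w)=G(x)^+\big(g_x(X_j(x),w)\big)_j$. This gives the minimal-norm coefficient vector representing $w$, and indeed $\sum_i a_i(x,w)X_i(x)=w$. The map $x\mapsto G(x)^+$ is Borel because the pseudoinverse is a Borel limit of $(G+\epsilon I)^{-1}G$-type expressions. Hence $a$ is Borel, and composing with $\Gamma$ and the evaluation/derivative maps yields measurable $a^{Z_0,Z_1}(t)$. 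This step handles the rank drops of $\Delta$ that are typical of bracket-generating frames.

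Next, define the feedback laws by conditional expectation:
\[
u_i(t,x):=\mathbb E\big[a_i^{Z_0,Z_1}(t)\,\big|\,\gamma_t^{Z_0,Z_1}=x\big],
\]
taken as a measurable version of the disintegration with respect to $\mu_t$. Because $X_i(x)$ depends only on the conditioning variable, linearity of conditional expectation gives
\[
v_t(x)=\mathbb E\Big[\sum_i a_i^{Z_0,Z_1}(t)X_i(\gamma_t)\,\Big|\,\gamma_t=x\Big]=\sum_{i=1}^k u_i(t,x)X_i(x)
\]
for $\mu_t$-a.e.\ $x$ and a.e.\ $t$. Off the support of $\mu_t$ we set $u_i=0$. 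We are allowed to modify $v_t$ on $\mu_t$-null sets, provided the modification is compatible with the regularity assumption in part (ii) of Theorem~\ref{thm:general-matching}; in practice we simply take $v_t$ to be defined by this formula. Integrability of $u_i$ follows from $\mathbb E\int_0^1|a(t)|\,dt<\infty$, which I would include in the "standard integrability" hypotheses. Jensen's inequality then controls the conditional expectation.

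Finally, part (ii) of Theorem~\ref{thm:general-matching} gives a flow $\phi_t$ of $\dot x=v_t(x)=\sum_i u_i(t,x)X_i(x)$ with $(\phi_1)_\#\mu_0=\mu_1$. Since $u_i$ depends only on $(t,x)$, these controls are feedback laws. I expect the main obstacle to be the measurability of the coefficients in the second step, together with the fact that $G$ is not invertible where the $X_i$ are dependent. The pseudoinverse construction is what I would try first. A secondary subtlety is that the a.e.\ identity for $v_t$ must be consistent with the assumed well-posedness of the flow, and I would absorb this into the hypotheses inherited from Theorem~\ref{thm:general-matching}.
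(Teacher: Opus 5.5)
Your proposal is correct and is essentially the paper's proof: write $\dot Z_t$ in terms of pathwise controls along the admissible bridges, define $u_i(t,x)$ by conditioning those controls on $Z_t=x$, pull $X_i(x)$ out by linearity to get $v_t=\sum_i u_iX_i$, and conclude with part (ii) of Theorem~\ref{thm:general-matching}. What you add is justification the paper omits: a measurable choice of coefficients via the Gram pseudoinverse (here use $(G^2+\epsilon I)^{-1}G\to G^+$, since $(G+\epsilon I)^{-1}G$ tends to the projection onto $\operatorname{range}G$ instead); an explicit integrability assumption on those coefficients, which does not follow from $\mathbb E\int_0^1\Norm{\dot Z_t}_g\,dt<\infty$ where the frame degenerates; and the caveat that $v_t=\sum_i u_iX_i$ holds only $\mu_t$-a.e.
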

To operationalize this framework, we seek a set of vector fields whose cardinality is independent of the ambient dimension but still satisfies the bracket-generating condition in Theorem~\ref{thm:chow}. The following lemma shows that in $\mathbb{R}^d$, just two appropriately chosen vector fields suffice to generate the entire tangent space. Combined with Corollary~\ref{cor:push}, this result enables the design of expressive generative models using only two scalar control functions.

\begin{lem}
\label{lem:main}
On $\mathbb{R}^d$ (with coordinates $x_1, \dots, x_d$), $d\ge 3$, define the vector fields,
\[
V_1 = \frac{\partial}{\partial x_1} + x_2 \frac{\partial}{\partial x_3} + x_3 \frac{\partial}{\partial x_4} + \dots + x_{d-1} \frac{\partial}{\partial x_d}, \qquad V_2 = \frac{\partial}{\partial x_2}.
\]
Then the Lie algebra generated by $V_1$ and $V_2$ spans the tangent space $T_x\mathbb{R}^d$ at every point $x \in \mathbb{R}^d$.
\end{lem}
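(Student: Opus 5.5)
Write $V_1=\partial_{x_1}+\sum_{j=2}^{d-1}x_j\,\partial_{x_{j+1}}$ and $V_2=\partial_{x_2}$. The plan is to compute iterated brackets of the form $\mathrm{ad}_{V_2}$ and $\mathrm{ad}_{V_1}$ applied repeatedly. We will show that they produce the constant coordinate fields $\partial_{x_3},\dots,\partial_{x_d}$, up to sign. Together with $V_2=\partial_{x_2}$ and $V_1$ itself, these fields span $T_x\mathbb{R}^d$ at every point.

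For constant-coefficient or linear fields the bracket is explicit: for vector fields $X=\sum a_i\partial_{x_i}$ and $Y=\sum b_j\partial_{x_j}$ one has $[X,Y]=\sum_j\bigl(X(b_j)-Y(a_j)\bigr)\partial_{x_j}$.

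\textbf{First bracket.} Since $V_2(\text{coefficients of }V_1)$ only sees the coefficient $x_2$ of $\partial_{x_3}$, we get $[V_2,V_1]=\partial_{x_3}$.

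\textbf{Induction.} We claim that $W_m:=\partial_{x_m}$ satisfies $[W_m,V_1]=\partial_{x_{m+1}}$ for $2\le m\le d-1$. The only coefficient of $V_1$ depending on $x_m$ is $x_m$, sitting in front of $\partial_{x_{m+1}}$, and $W_m$ has constant coefficients, so the formula above gives exactly $\partial_{x_{m+1}}$. Hence
\[
\partial_{x_{m+1}}=\bigl[\partial_{x_m},V_1\bigr]=(-\mathrm{ad}_{V_1})^{m-1}(V_2)
\]
up to sign. By induction, every $\partial_{x_j}$ with $2\le j\le d$ lies in $\mathrm{Lie}\{V_1,V_2\}$.

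\textbf{Spanning.} At any point $x$, the vectors $\partial_{x_2},\dots,\partial_{x_d}$ span the hyperplane $\{v_1=0\}$. Moreover $V_1(x)$ has first component $1$, so adding $V_1(x)$ yields all of $T_x\mathbb{R}^d$. Equivalently, $V_1-\sum_{j=2}^{d-1}x_j\partial_{x_{j+1}}=\partial_{x_1}$ pointwise. Note that this pointwise combination uses non-constant coefficients, which is fine because the spanning condition in Theorem~\ref{thm:chow} is evaluated pointwise.

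The only real care needed is in the induction step: we must check that no extra terms appear. They do not, because each $\partial_{x_m}$ has constant coefficients and $V_1$ is affine with each $x_m$ appearing exactly once. Consequently the brackets stay constant fields and the chain terminates cleanly at $\partial_{x_d}$. The hypothesis $d\ge3$ ensures that the chain is nonempty.
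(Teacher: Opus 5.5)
Your proof is correct and follows the same route as the paper. Both arguments iterate brackets with $V_1$ starting from $V_2$ to obtain $\partial_{x_3},\dots,\partial_{x_d}$; the paper writes this as $\mathrm{ad}_{V_1}^k(V_2)=(-1)^k\partial_{x_{k+2}}$, and your ordering $[\partial_{x_m},V_1]=\partial_{x_{m+1}}$ only removes the signs. Both then add $V_2=\partial_{x_2}$ and $V_1(x)$, whose first component is $1$, to span $T_x\mathbb{R}^d$.
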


\begin{proof}
The proof is provided in Section \ref{sec:proofs}.
\end{proof}
\begin{rmk}[Coordinate permutations]
The construction in Lemma~\ref{lem:main} is not tied to the particular ordering
\((x_1,\ldots,x_d)\). For any permutation \((i_1,\ldots,i_d)\) of the coordinate
indices, the pair,
\[
    V_1
    =
    \partial_{x_{i_1}}
    +
    x_{i_2}\partial_{x_{i_3}}
    +
    \cdots
    +
    x_{i_{d-1}}\partial_{x_{i_d}},
    \qquad
    V_2=\partial_{x_{i_2}}
\]
is also bracket-generating. This follows either by repeating the bracket
calculation or by observing that bracket generation is invariant under
coordinate permutations.
\end{rmk}
\begin{rmk}
\label{rmk:chow}
Under the assumptions of Corollary~\ref{cor:push}, Lemma~\ref{lem:main}
shows that two appropriately chosen vector fields are sufficient to obtain a
bracket-generating structure in \(\mathbb R^d\). This generalizes classical flow-based generative models \citep{chen2018neural,papamakarios2021normalizing,lipman2022flow}, where $M = \mathbb{R}^d$ and the vector fields are taken to be the standard basis \( X_i = \frac{\partial}{\partial x_i} \). In contrast, our framework allows for a much smaller set of vector fields, provided they generate the full tangent space via Lie brackets. This flexibility is the core insight behind our parameter-efficient approach.
\end{rmk}
\begin{rmk}[Parameter efficiency]
\label{rmk:parameter-efficiency}
Lemma~\ref{lem:main} shows that, in principle, two appropriately chosen
vector fields can bracket-generate the full tangent space in arbitrary ambient
dimension \(d\). Consequently, the number of learned scalar control channels
can be kept fixed at \(k=2\), independently of \(d\). This should be contrasted with standard continuous-time flow parameterizations,
where the model typically outputs a full \(d\)-dimensional velocity vector. In
that case, the number of output channels scales linearly with the ambient
dimension. In our formulation, the network only outputs the scalar controls
multiplying the fixed vector fields. 
Thus, when each control channel is modeled by a neural network of comparable
size, the bracket-generating construction can substantially reduce the number
of learned output channels while retaining expressivity through Lie-bracket
generation. This statement concerns the number of scalar output channels; the
total number of neural-network parameters may still depend on the ambient
dimension through the input layer and hidden representation.
\end{rmk}
To illustrate how Lemma~\ref{lem:main} operates in low dimensions, we consider the Heisenberg group, a canonical example in geometric control theory that demonstrates how Lie brackets enable access to new directions beyond the original vector fields.

\begin{expl}[Heisenberg Group]
\label{example:heisenberg}
The \emph{(first) Heisenberg group} $\mathbb{H}$ is the manifold $\mathbb{R}^3$ equipped with the group law,
\[
(x_1, x_2, x_3) \cdot (y_1, y_2, y_3) = \left(x_1 + y_1,\, x_2 + y_2,\, x_3 + y_3 + x_1 y_2\right).
\]
A standard horizontal generating pair of left-invariant vector fields is,
\[
V_2 = \frac{\partial}{\partial x_2} + x_1 \frac{\partial}{\partial x_3}, \qquad
V_1 = \frac{\partial}{\partial x_1}.
\]
These fields satisfy,
\[
[V_1, V_2] = \frac{\partial}{\partial x_3},
\]
and together $\{V_1, V_2, [V_1, V_2]\}$ span the full tangent space $T_{x}\mathbb{R}^3$ at every point.
\end{expl}
\subsection{Learning Distribution Flows via Control Vector Fields}
Building on Lemma~\ref{lem:main}, we introduce a CNF-style generative
modeling algorithm that transports a simple source distribution
\(\mu_0\) (e.g., a standard Gaussian) to a target data distribution
\(\mu_1\) through a controlled continuous-time flow. The central idea is to
replace an unconstrained full-dimensional vector field by a structured control-weighted combination of fixed geometric vector fields with learned scalar
coefficients. Thus, instead of directly learning an arbitrary velocity
field \(v_\theta(t,x)\in\mathbb R^d\), we parameterize it as,
\[
    v_\theta(t,x)
    =
    \sum_{i=1}^k a_i(t,x;\theta)V_i(x),
\]
where \(V_1,\ldots,V_k\) are fixed vector fields and
\(a_i(t,x;\theta)\) are scalar control functions. When the \(V_i\) satisfy
the bracket-generating condition of Theorem~\ref{thm:chow}, the resulting
system retains strong controllability properties while using only a small
number of learned scalar control channels. In the Euclidean setting, we use the explicit construction from
Lemma~\ref{lem:main}. Namely, we define,
\[
    V_1
    =
    \frac{\partial}{\partial x_1}
    +
    x_2\frac{\partial}{\partial x_3}
    +\cdots+
    x_{d-1}\frac{\partial}{\partial x_d},
    \qquad
    V_i=\frac{\partial}{\partial x_i},
    \quad 2\le i\le k.
\]
For \(k=2\), Lemma~\ref{lem:main} shows that \(V_1,V_2\) already
bracket-generate \(T\mathbb R^d\). More generally, one may choose
\(k\ge 2\) fixed vector fields satisfying the same bracket-generating
condition.
\begin{algorithm}[t]
\caption{CNF-style ChowFlow Training in \(\mathbb R^d\)}
\label{alg:cnf-chowflow}
\begin{algorithmic}[1]
\State \textbf{Input:} number of control fields $2\le k \le d$, dimension $d\ge 3$;
samples $\{y^{(i)}\}_{i=1}^N\!\sim\!\mu_1$, base distribution $\mu_0$ with density \(p_0\), terminal time \(T=1\), number of ODE steps \(K\).
\State \textbf{Output:} Trained scalar controls \(a_i(t,x;\theta)\), $1\le i \le k$.

\State Initialize $a_i(\cdot;\theta_i)$ as MLPs with input $(t,x)$, $1\le i\le k$.
\State $V_1 \gets \partial_{x_1} + x_2 \partial_{x_3} + \cdots + x_{d-1}\partial_{x_d}$.
\State $V_i \gets \partial_{x_i}$ for $2 \le i \le k$.
\State Define the velocity field,
\[
    v_\theta(t,x)
    :=
    \sum_{i=1}^k a_i(t,x;\theta)V_i(x).
\]

\While{not converged}
    \State Sample minibatch \(\{y^{(j)}\}_{j=1}^B\sim \mu_{1}\)
    \For{each \(y^{(j)}\)}
        \State Integrate the dynamics backward from \(t=T\) to \(t=0\) using \(K\) ODE steps, while accumulating the divergence:
        \[
        \frac{d X^{(j)}_t}{dt}=v_\theta(t,X^{(j)}_t),
        \qquad
        X^{(j)}_T=y^{(j)}.
        \]
        \State Let \(z^{(j)}:=X^{(j)}_0\) and,
        \[
        \Delta^{(j)}:=\int_0^T \mathrm{div}(v_\theta(t,X^{(j)}_t))\,dt.
        \]
        \State Compute:
        \[
        \log p_\theta(y^{(j)})
        =
        \log p_0(z^{(j)})-\Delta^{(j)}.
        \]
    \EndFor
    \State Minimize the negative log-likelihood,
    \[
    \mathcal L_{\mathrm{CNF}}(\theta)
    =
    -\frac{1}{B}\sum_{j=1}^B
    \log p_\theta(y^{(j)}).
    \]
    \State Update \(\theta\) by backpropagation through the ODE solver.
\EndWhile

\State \Return trained controls \(a_i(t,x;\theta)\)
\end{algorithmic}
\end{algorithm}
\subsection{Parameterization.}
Each scalar control \(a_i\) is implemented as a neural network with input
\((t,x)\in [0,T]\times\mathbb R^d\) and scalar output. This is a
Markovian parameterization: unlike the endpoint-matching formulation, the
controls do not explicitly depend on the initial point \(x_0\). This
choice allows the model to define a standard continuous-time flow and to
be trained by likelihood using the continuous normalizing flow
change-of-variables formula.

\subsection{Dynamics.}
The learned flow is defined by the ODE,
\[
    \dot x(t)
    =
    v_\theta(t,x(t))
    =
    \sum_{i=1}^k a_i(t,x(t);\theta)V_i(x(t)).
\]
When \(V_i=\partial_{x_i}\) for \(i=1,\ldots,d\), this reduces to the
usual continuous normalizing flow parameterization,
\[
    v_\theta(t,x)
    =
    \bigl(a_1(t,x;\theta),\ldots,a_d(t,x;\theta)\bigr).
\]
\subsection{Likelihood training.}
Let \(p_0\) denote the density of the base distribution \(\mu_0\), for
example \(p_0\) may be the density of a standard Gaussian distribution. The density \(p_t\) induced by the ODE
satisfies the continuity equation,
\[
    \partial_t p_t+\mathrm{div}((p_t v_\theta))=0.
\]
Equivalently, along a trajectory \(x_t\) solving
\(\dot x_t=v_\theta(t,x_t)\), the log-density evolves according to,
\[
    \frac{d}{dt}\log p_t(x_t)
    =
    -\mathrm{div}(v_\theta(t,x_t)).
\]
Thus, for a data sample \(x_T\sim\mu_1\), we integrate the
dynamics backward from \(t=T\) to \(t=0\), while accumulating
the divergence correction:
\[
    \frac{d x_t}{dt}=v_\theta(t,x_t),
    \qquad
    x_T=x_{\mathrm{data}},\quad 
\]
If the backward integration gives \(z=x_0\) and,
\[
    \Delta
    =
    \int_0^T
    \mathrm{div}( v_\theta(t,x_t))\,dt,
\]
then the model likelihood is computed by,
\[
    \log p_\theta(x_{\mathrm{data}})
    =
    \log p_0(z)-\Delta.
\]
The training objective is the negative log-likelihood,
\[
    \mathcal L_{\mathrm{CNF}}(\theta)
    =
    -\mathbb E_{x\sim\mu_1}
    \left[
        \log p_\theta(x)
    \right].
\]

\subsection{Optimization.}
In practice, the augmented ODE is solved numerically, and the divergence
\(\mathrm{div} (v_\theta\)) is computed either exactly by automatic
differentiation or approximately using a trace estimator in high
dimensions. The parameters \(\theta\) of the scalar control networks are
trained by backpropagation through the ODE solver. Algorithm~\ref{alg:cnf-chowflow}
summarizes the resulting CNF-style ChowFlow training procedure.
\begin{figure}
    \centering
    \includegraphics[width=1.1\linewidth]{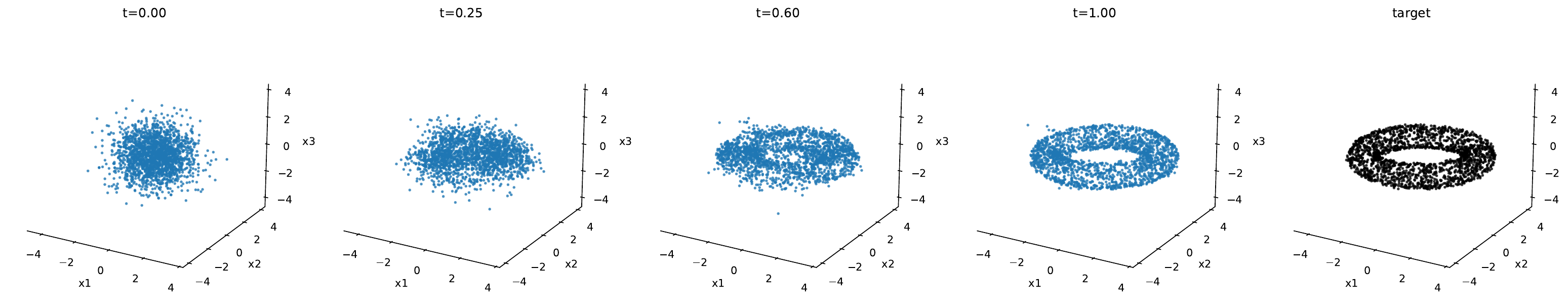}
    \includegraphics[width=1.1\linewidth]{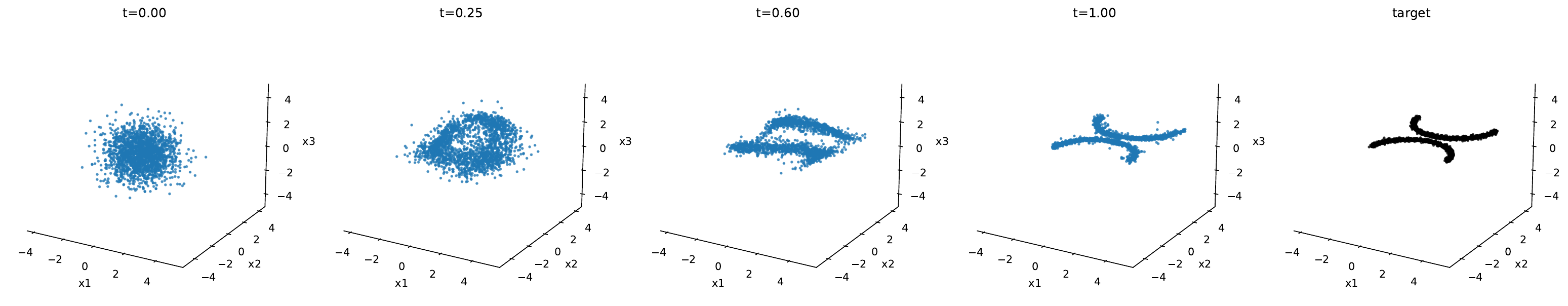}
    \includegraphics[width=1.1\linewidth]{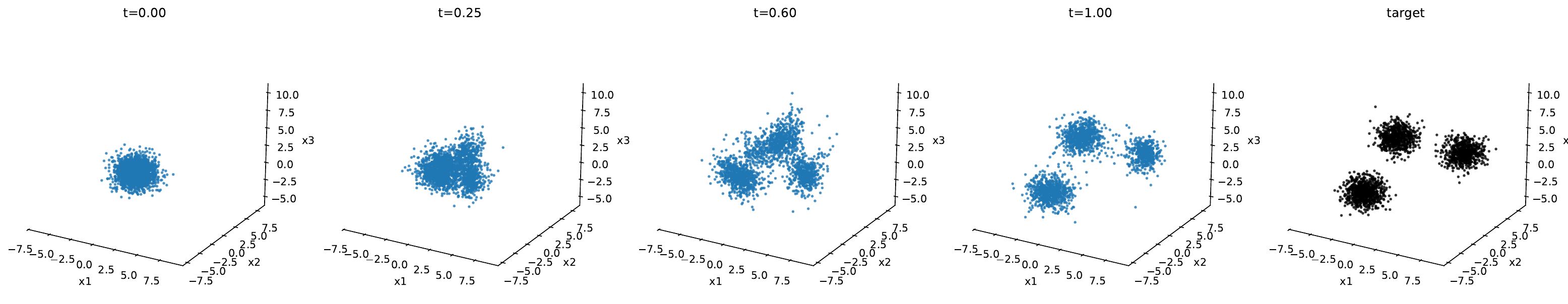}
    \caption{Algorithm \ref{alg:cnf-chowflow} applied to 3D synthetic distributions. Each row illustrates the learned transport from a simple Gaussian base distribution (left) to a complex target distribution (right). The target samples, shown in black, include a torus, two moons, and a Gaussian mixture. Intermediate transported samples are shown in blue. The flow qualitatively captures the geometric structure of the target using only two control channels (as constructed in Lemma~\ref{lem:main}), despite the ambient dimension being three.}
    \label{fig:synthetic}
\end{figure}

\section{Experiments}
\label{sec:experiments}

In this section, we evaluate the proposed framework through a series of experiments. We begin by demonstrating that our framework enables parameter-efficient generative modeling on several standard synthetic benchmarks. In particular, we consider three-dimensional settings where the underlying control system is governed by two vector fields constructed as in Lemma~\ref{lem:main}.

\subsection{Simulation Setup.}
In synthetic experiments, the source distribution $\mu_0$ is a standard Gaussian in $\mathbb{R}^3$, i.e., $\mu_0 = \mathcal{N}(0, I)$. The target distribution $\mu_1$ varies across tasks and includes several challenging benchmarks commonly used to evaluate flow-based models: (1) a two-moons distribution embedded in three dimensions, (2) a nonlinearly embedded torus (ring), and (3) a mixture of well-separated Gaussians. These targets exhibit either multimodality or complex geometry, making them nontrivial to model.

The system dynamics are defined by two fixed, smooth vector fields $V_1$ and $V_2$ (as described in Lemma~\ref{lem:main}), while the scalar control functions $a_1(t, x)$ and $a_2(t, x)$ are parameterized by small MLPs. Training is performed by maximum likelihood using the continuous normalizing flow change-of-variables formula. Specifically, we integrate data samples backward through
the learned controlled dynamics to the Gaussian base distribution while accumulating
the divergence term of the induced vector field. Additional training and hyperparameter
details are provided in Section~ \ref{sec:exper}.

\subsection{Visualization of Learned Transport.}
Figure~\ref{fig:synthetic} illustrates the transport induced by Algorithm \ref{alg:cnf-chowflow} on several synthetic distributions in $\mathbb{R}^3$, including a mixture of Gaussians, a two-moons manifold embedded in 3D, and a nonlinear torus-like structure. In each row, we visualize three key stages: (1) the initial samples drawn from a simple Gaussian prior (left), (2) intermediate samples during transport (middle, shown in blue), and (3) final transported samples (right). The target samples are visualized in black for reference.

Despite using only two control channels, the learned flows qualitatively match
the geometric structure of each target. In the Gaussian mixture case, the model separates and places mass near the correct modes. In the two-moons case, the flow deforms the isotropic prior into a curved bimodal support. For the torus, the learned flow captures the ring-like geometry without collapsing or overshooting. These results highlight the expressiveness of the proposed framework in low-dimensional settings. They also provide a geometric intuition for how our framework leverages the bracket structure of its vector fields to construct complex transformations with minimal parameterization.

\section{Proofs}
\label{sec:proofs}
In this section, we provide detailed proofs of the theoretical results stated above. We begin with Lemma \ref{lem:main}.
\begin{lem*}[Restatement of Lemma \ref{lem:main}]

On $\mathbb{R}^d$ (with coordinates $x_1, \dots, x_d$), $d\ge 3$, define the vector fields
\[
V_1 = \frac{\partial}{\partial x_1} + x_2 \frac{\partial}{\partial x_3} + x_3 \frac{\partial}{\partial x_4} + \dots + x_{d-1} \frac{\partial}{\partial x_d}, \qquad V_2 = \frac{\partial}{\partial x_2}.
\]
Then the Lie algebra generated by $V_1$ and $V_2$ spans the tangent space $T_x\mathbb{R}^d$ at every point $x \in \mathbb{R}^d$.
\end{lem*}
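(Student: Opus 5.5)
The plan is to compute iterated brackets of $V_1$ with $V_2$ explicitly and show that they produce every coordinate field $\partial_{x_j}$ for $j\ge 3$, while $\partial_{x_1}$ and $\partial_{x_2}$ are recovered directly. Since coordinate fields are linearly independent at every point, this spans $T_x\mathbb{R}^d$ for every $x$.

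\textbf{Coordinate bracket formula.} For vector fields $X=\sum_j X^j\partial_{x_j}$ and $Y=\sum_j Y^j\partial_{x_j}$ we use
\[
[X,Y]=\sum_j \bigl(X(Y^j)-Y(X^j)\bigr)\partial_{x_j},
\]
which follows directly from the definition of the Lie bracket in Section~\ref{sec:background}. Write $V_1=\partial_{x_1}+\sum_{m=2}^{d-1}x_m\partial_{x_{m+1}}$.

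\textbf{Inductive claim.} Define $W_2:=V_2=\partial_{x_2}$ and $W_{j+1}:=[W_j,V_1]$. We claim
\[
W_j=\partial_{x_j}\qquad\text{for } 2\le j\le d.
\]
The induction step goes as follows. If $W_j=\partial_{x_j}$ with $j\le d-1$, the coefficients of $W_j$ are constant, so
\[
[\partial_{x_j},V_1]=\sum_i \partial_{x_j}(V_1^i)\,\partial_{x_i}.
\]
The only coefficient of $V_1$ depending on $x_j$ is $V_1^{j+1}=x_j$. This is valid for $2\le j\le d-1$, and here $j=2$ and $j=d-1$ are the boundary cases to check. Hence $[\partial_{x_j},V_1]=\partial_{x_{j+1}}$. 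Because $j\ge 2$, the constant term $\partial_{x_1}$ of $V_1$ contributes nothing. So the brackets $[\cdots[[V_2,V_1],V_1],\ldots,V_1]$ with $j-2$ copies of $V_1$ give $\partial_{x_j}$ for $3\le j\le d$, and these all lie in $\mathrm{Lie}\{V_1,V_2\}$.

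\textbf{Conclusion.} The fields $\partial_{x_2},\ldots,\partial_{x_d}$ lie in the Lie algebra, as does the linear combination
\[
V_1-\sum_{m=2}^{d-1}x_m\partial_{x_{m+1}}=\partial_{x_1}
\]
evaluated pointwise. At a fixed point $x$, $V_1(x)$ and $\partial_{x_3},\ldots,\partial_{x_d}$ are all in $\mathrm{Lie}\{V_1,V_2\}(x)$. The span of their values contains $V_1(x)$ minus a combination of $\partial_{x_j}|_x$, so it contains $\partial_{x_1}|_x$. Therefore $\mathrm{Lie}\{V_1,V_2\}(x)$ contains the full coordinate basis of $T_x\mathbb{R}^d$.

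There is no real obstacle. The only points needing care are the sign convention, which is irrelevant for spanning since $[V_1,W_j]=-\partial_{x_{j+1}}$ spans equally, and the fact that $\partial_{x_1}$ is obtained pointwise from the spanned values rather than as an element of the Lie algebra itself. Lie algebra membership is over $\mathbb{R}$ with constant coefficients, but the bracket-generating condition only concerns the values at $x$.
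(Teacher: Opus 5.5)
Your proposal is correct and follows essentially the same route as the paper: iterated brackets of $V_1$ with $V_2$ produce $\partial_{x_3},\ldots,\partial_{x_d}$ by induction, and $V_1(x)$ then supplies the missing $\partial_{x_1}$ direction pointwise. The differences are minor. You bracket on the right, $[W_j,V_1]=\partial_{x_{j+1}}$, which removes the alternating signs $(-1)^k$ in the paper's $\operatorname{ad}_{V_1}^k(V_2)$, and your remark that $\partial_{x_1}$ is recovered only from the spanned values at $x$, not as an element of the Lie algebra, is a useful clarification.
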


\begin{proof}
First, we compute the commutator:
\[
[V_1, V_2] = V_1 V_2 - V_2 V_1.
\]
Since $V_2 = \frac{\partial}{\partial x_2}$, only the term in $V_1$ involving $x_2$ contributes. Thus,
\[
[V_1, V_2] = -\frac{\partial}{\partial x_3}.
\]
We now claim that for $k = 1, \dots, d-2$,
\[
\operatorname{ad}_{V_1}^k(V_2) := [V_1, [V_1, \dots, [V_1, V_2] \dots ]] = (-1)^{k}\frac{\partial}{\partial x_{k+2}}.
\]
We prove this by induction on $k$. For the base case $k = 1$, we have already shown $[V_1, V_2] = -\frac{\partial}{\partial x_3}$. Suppose that for some $k \geq 1$,
\[
\operatorname{ad}_{V_1}^k(V_2) = (-1)^{k}\frac{\partial}{\partial x_{k+2}}.
\]
Then,
\[
[V_1, \frac{\partial}{\partial x_{k+2}}]
\]
acts nontrivially only on the term $x_{k+2} \frac{\partial}{\partial x_{k+3}}$ in $V_1$, whose derivative with respect to $x_{k+2}$ is exactly $\frac{\partial}{\partial x_{k+3}}$. Therefore,
\[
[V_1, \frac{\partial}{\partial x_{k+2}}] = -\frac{\partial}{\partial x_{k+3}}.
\]

By induction, the Lie algebra generated by $V_1$ and $V_2$ contains $\frac{\partial}{\partial x_i}$ for all $i = 2, \dots, d$ up to sign. In particular, for any $x=(x_{1},...,x_{d}) \in \R^{d}$, we have $\frac{\partial}{\partial x_{i}}$ for $2 \le i \le d$ in $T_{x}\R^{d}$. Together with $V_1$, the collection $\{V_1, \frac{\partial}{\partial x_2}, \dots, \frac{\partial}{\partial x_d}\}$ spans $T_{x}\R^{d}$ at every point $x$. Thus, the conditions of Theorem~\ref{thm:chow} are satisfied.
\end{proof}
\begin{expl}[Relation of Lemma \ref{lem:main} to Heisenberg Group]
Consider the case $d=3$, so that our space is $\mathbb{R}^3$ with coordinates $(x_1, x_2, x_3)$. Define the vector fields,
\[
V_2 = \frac{\partial}{\partial x_2} + x_1 \frac{\partial}{\partial x_3}, \qquad
V_1 = \frac{\partial}{\partial x_1}.
\]
These are precisely the standard generating vector fields for the (first) Heisenberg group $\mathbb{H}$. To see that $V_1$ and $V_2$ generate the entire tangent space at each point, compute their commutator:
\[
[V_1, V_2] = \frac{\partial}{\partial x_3},
\]
Therefore, the Lie algebra generated by $V_1$ and $V_2$ contains the full set $\left\{ \frac{\partial}{\partial x_1}, \frac{\partial}{\partial x_2}, \frac{\partial}{\partial x_3} \right\}$ at every point. This is analogous to the structure described in Lemma~\ref{lem:main}, up to a permutation of coordinates.

\end{expl}
Next, we restate and prove Theorem~\ref{thm:general-matching}. Let
\(\mu_0\) and \(\mu_1\) be probability measures on \((M,g)\), and let
\(\pi\in\Pi(\mu_0,\mu_1)\) be a coupling between them. We impose the following
integrability and regularity assumptions. The integrability assumptions justify
the use of the chain rule along the random paths and the interchange of
expectation and time integration. The regularity assumption ensures that the
Eulerian velocity field induced by conditional averaging generates a unique
well-posed flow. Throughout the following, we assume that,
\begin{enumerate}
    \item
    \[
        \mathbb E\int_0^1 \Norm{\dot Z_t}_g\,dt<\infty;
    \]
    \item there exists a Borel vector field,
    \[
        v:[0,1]\times M\to TM,
        \qquad
        v_t(x)\in T_xM,
    \]
    such that,
    \[
        v_t(Z_t)=\mathbb E[\dot Z_t\mid Z_t]
    \]
    almost surely for almost every \(t\in[0,1]\). Assume in addition that the induced velocity field \(v\) is regular
enough to generate a unique global flow. For example, when \(M\) is compact, the condition,
\[
    v\in L^1([0,1];C^1(M;TM)),
\]
is sufficient to generate a unique non-autonomous flow and to ensure that the
continuity equation is uniquely solved by the corresponding flow pushforward.
\end{enumerate}
Under these assumptions, the field $v_t$ transports the interpolating marginals $(\mu_t)$ in the sense of the continuity equation.
\begin{thm*}[Restatement of Theorem \ref{thm:general-matching}]
Let $\mu_0$ and $\mu_1$ be probability measures on $(M,g)$, and let
$
\pi \in \Pi(\mu_0,\mu_1)
$
be a coupling. Let $\mathcal{F}$ be a class of path bridges as in Definition \ref{deff:path-bridge}.
Let $(Z_0,Z_1)$ be a random pair with law $\pi$, and define the random path
$
Z_t := \gamma_t^{Z_0,Z_1}, t\in[0,1]
$ (See Remark \ref{rmk:random-interpolating-path}).
Let
$
\mu_t := \mathrm{Law}(Z_t),
$
and $
v_t(x)
=
\mathbb{E}\big[\dot\gamma_t^{Z_0,Z_1}\mid \gamma_t^{Z_0,Z_1}=x\big].
$
Under the above assumptions, the following statements hold.

\begin{enumerate}
\item[(i)] The pair $(\mu_t,v_t)$ satisfies the continuity equation
$
\partial_t \mu_t + \mathrm{div}_g(\mu_t v_t)=0
$
in the weak sense, that is, for every $\varphi \in C_c^\infty(M)$ and for almost every $t\in[0,1]$,
$$
\frac{d}{dt}\int_{M}\varphi(x)\,d\mu_t(x)
=
\int_{M}d \varphi(v_t(x))d\mu_t(x)
=\int_{M}g_x(\nabla_g\varphi(x), v_t(x))d\mu_t(x).$$
\item[(ii)] Consider the ODE,
\begin{align*}
  &  \dot Y_t = v_t(Y_t),\\
  & Y_0=y,
\end{align*}
Assume that the time-dependent vector field \(v_t\) is regular enough to
generate a unique non-autonomous flow map \(\phi_{0,t}\), and that the
associated continuity equation is uniquely solved by the corresponding flow
pushforward. For simplicity, write
\(\phi_t:=\phi_{0,t}\), then,
$$
\mu_t = (\phi_t)_\# \mu_0
\qquad \text{for all } t\in[0,1].
$$
In particular,
$
(\phi_1)_\# \mu_0 = \mu_1.
$
\end{enumerate}
\end{thm*}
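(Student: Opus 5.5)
For part (i), the plan is the standard flow-matching computation. Fix a test function \(\varphi\in C_c^\infty(M)\). Since each path \(t\mapsto Z_t=\gamma_t^{Z_0,Z_1}\) is absolutely continuous and \(\varphi\) is smooth, the chain rule holds pathwise for a.e. \(t\):
\[
\frac{d}{dt}\varphi(Z_t)=d\varphi_{Z_t}(\dot Z_t).
\]
Integrating from \(s\) to \(t\) gives
\[
\varphi(Z_t)-\varphi(Z_s)=\int_s^t d\varphi_{Z_r}(\dot Z_r)\,dr .
\]
Because \(\varphi\) has compact support, \(\sup_x\Norm{d\varphi_x}_g<\infty\). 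Hence the integrand is bounded by \(C\Norm{\dot Z_r}_g\), which is integrable by assumption (1). This allows us to take expectations and apply Fubini to obtain
\[
\int_M\varphi\,d\mu_t-\int_M\varphi\,d\mu_s=\int_s^t \mathbb E\big[d\varphi_{Z_r}(\dot Z_r)\big]\,dr .
\]
So \(t\mapsto\int\varphi\,d\mu_t\) is absolutely continuous, and for a.e. \(t\) its derivative is \(\mathbb E[d\varphi_{Z_t}(\dot Z_t)]\).

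Next I condition on \(Z_t\). The map \(w\mapsto d\varphi_{Z_t}(w)\) is linear on the fibre \(T_{Z_t}M\) and has bounded coefficients. Using assumption (2), the tower property then gives
\[
\mathbb E[d\varphi_{Z_t}(\dot Z_t)]=\mathbb E\big[d\varphi_{Z_t}(\mathbb E[\dot Z_t\mid Z_t])\big]=\mathbb E[d\varphi_{Z_t}(v_t(Z_t))]=\int_M d\varphi(v_t)\,d\mu_t .
\]
This uses that \(\mu_t=\mathrm{Law}(Z_t)\). The last equality in (i) is simply \(d\varphi(w)=g(\nabla_g\varphi,w)\).

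The one technical point here is that \(\dot Z_t\) lives in varying tangent spaces. I would handle it either by working in a local trivialization, or by embedding \(TM\) in some \(\R^N\) via Nash/Whitney so that conditional expectation makes sense componentwise. Since assumption (2) already posits the Borel field \(v\), only the linearity of \(d\varphi_{Z_t}\) in the fibre variable is needed.

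For part (ii), I would first show that \(\nu_t:=(\phi_t)_\#\mu_0\) also solves the weak continuity equation with the same field \(v\). For each \(y\), differentiate \(\varphi(\phi_t(y))\) along the ODE to get \(d\varphi(v_t(\phi_t(y)))\). Then integrate against \(\mu_0\), using \(v\in L^1([0,1];C^1)\) and the compact support of \(\varphi\) for dominated convergence. Both \((\mu_t)\) and \((\nu_t)\) then solve the continuity equation with initial datum \(\mu_0\). The assumed uniqueness of solutions of this equation, which is exactly the hypothesis in (ii), forces \(\mu_t=\nu_t\) for all \(t\). Finally, \(t=1\) gives \((\phi_1)_\#\mu_0=\mu_1\), because \(\mathrm{Law}(Z_1)=\mu_1\) by the endpoint property of the bridges.

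I expect the main obstacle to be the justification steps rather than any algebra. One is making sure the continuity equation holds in the precise weak sense, on \([0,1]\) with continuity of \(t\mapsto\mu_t\), that the uniqueness hypothesis requires. This is narrowly continuous since \(Z_t\) is continuous in \(t\) and dominated convergence applies. The other is the measurability of \((t,\omega)\mapsto\dot Z_t\), needed for Fubini, which I would derive from the measurable selection \(\Gamma\) together with the difference-quotient definition of \(\dot Z_t\).
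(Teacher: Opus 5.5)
Your proposal is correct and follows essentially the same route as the paper. For (i), both use the pathwise chain rule, the integrability bound $\|d\varphi\|_\infty\|\dot Z_t\|_g$, and the tower property with $v_t(Z_t)=\mathbb{E}[\dot Z_t\mid Z_t]$; for (ii), both show that $(\phi_t)_\#\mu_0$ solves the same weak continuity equation and then invoke the assumed uniqueness. Your added care about the integrated form with Fubini, the narrow continuity of $t\mapsto\mu_t$, and the measurability of $\dot Z_t$ fills in details the paper leaves implicit, but it does not change the argument.
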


\begin{proof}
\begin{enumerate}
    \item[(i)]
    Fix $\varphi \in C_c^\infty(M)$. Since $t\mapsto Z_t$ is absolutely continuous for almost every realization, the chain rule yields,
$$
\frac{d}{dt}\varphi(Z_t)
=
g(\nabla_g\varphi(Z_t), \dot Z_t),
$$
for almost every $t\in[0,1]$ and almost surely. 
Because \(\varphi\in C_c^\infty(M)\), its Riemannian gradient is bounded,
$
\|\nabla_g\varphi\|_{L^\infty}<\infty.
$
Hence, by Cauchy--Schwarz,
\[
\left|
g_{Z_t}(\nabla_g\varphi(Z_t),\dot Z_t)
\right|
\le
\|\nabla_g\varphi\|_{L^\infty}\Norm{\dot Z_t}_g.
\]
By the assumption,
\[
\mathbb E\int_0^1 \Norm{\dot Z_t}_g\,dt<\infty,
\]
we obtain,
\[
\mathbb E\int_0^1
\left|
g_{Z_t}(\nabla_g\varphi(Z_t),\dot Z_t)
\right|dt
<\infty.
\]

Consequently, by the fundamental theorem of calculus for absolutely
continuous functions,
\[
    \frac{d}{dt}\mathbb E[\varphi(Z_t)]
    =
    \mathbb E\left[
        g_{Z_t}\bigl(\nabla_g\varphi(Z_t),\dot Z_t\bigr)
    \right],
\]
for almost every \(t\in[0,1]\). Since \(Z_t\) has law \(\mu_t\), we have,
\[
    \mathbb E[\varphi(Z_t)]
    =
    \int_M \varphi(x)\,d\mu_t(x).
\]
Consequently,
\[
    \frac{d}{dt}
    \int_M \varphi(x)\,d\mu_t(x)
    =
    \mathbb E\left[
        g_{Z_t}\bigl(\nabla_g\varphi(Z_t),\dot Z_t\bigr)
    \right].
\]

Next, we rewrite the right-hand side as an Eulerian integral depending only on
the current position \(Z_t\). Since \(\nabla_g\varphi(Z_t)\) is measurable with
respect to the sigma-algebra generated by \(Z_t\), the defining property of
conditional expectation gives,
\[
    \mathbb E\left[
        g_{Z_t}\bigl(\nabla_g\varphi(Z_t),\dot Z_t\bigr)
    \right]
    =
    \mathbb E\left[
        g_{Z_t}\bigl(
            \nabla_g\varphi(Z_t),
            \mathbb E[\dot Z_t\mid Z_t]
        \bigr)
    \right].
\]
By definition,
\[
    \mathbb E[\dot Z_t\mid Z_t]=v_t(Z_t),
\]
almost surely. Therefore,
\[
    \mathbb E\left[
        g_{Z_t}\bigl(\nabla_g\varphi(Z_t),\dot Z_t\bigr)
    \right]
    =
    \mathbb E\left[
        g_{Z_t}\bigl(\nabla_g\varphi(Z_t),v_t(Z_t)\bigr)
    \right].
\]
Using again that \(Z_t\sim\mu_t\), this becomes,
\[
    \mathbb E\left[
        g_{Z_t}\bigl(\nabla_g\varphi(Z_t),v_t(Z_t)\bigr)
    \right]
    =
    \int_M
        g_x\bigl(\nabla_g\varphi(x),v_t(x)\bigr)
    \,d\mu_t(x).
\]
Combining the identities above, we obtain, for every
\(\varphi\in C_c^\infty(M)\) and for almost every \(t\in[0,1]\),
\[
    \frac{d}{dt}
    \int_M \varphi(x)\,d\mu_t(x)
    =
    \int_M
        g_x\bigl(\nabla_g\varphi(x),v_t(x)\bigr)
    \,d\mu_t(x).
\]
Since the Riemannian gradient \(\nabla_g\varphi\) is characterized by,
\[
    d\varphi_x(w)
    =
    g_x\bigl(\nabla_g\varphi(x),w\bigr),
    \qquad
    w\in T_xM,
\]
the right-hand side can equivalently be written as,
\[
    \int_M d\varphi_x(v_t(x))\,d\mu_t(x).
\]
Thus,
\[
    \frac{d}{dt}
    \int_M \varphi\,d\mu_t
    =
    \int_M d\varphi(v_t)\,d\mu_t.
\]

We now identify this identity with the weak formulation of the continuity
equation. Formally, if \(\mu_t\) has a smooth density \(\rho_t\) with respect
to the Riemannian volume measure \(d\mathrm{vol}_g\), so that,
\[
    d\mu_t=\rho_t\,d\mathrm{vol}_g,
\]
then the Riemannian continuity equation is,
\[
    \partial_t\rho_t+\operatorname{div}_g(\rho_t v_t)=0.
\]
Multiplying this equation by a test function \(\varphi\in C_c^\infty(M)\)
and integrating over \(M\), we get,
\[
    \int_M \varphi\,\partial_t\rho_t\,d\mathrm{vol}_g
    +
    \int_M \varphi\,\operatorname{div}_g(\rho_t v_t)\,d\mathrm{vol}_g
    =
    0.
\]
The first term is,
\[
    \int_M \varphi\,\partial_t\rho_t\,d\mathrm{vol}_g
    =
    \frac{d}{dt}\int_M \varphi\,\rho_t\,d\mathrm{vol}_g
    =
    \frac{d}{dt}\int_M \varphi\,d\mu_t.
\]
For the second term, using the Riemannian integration-by-parts formula and
the fact that \(\varphi\) has compact support, we obtain,
\[
    \int_M \varphi\,\operatorname{div}_g(\rho_t v_t)\,d\mathrm{vol}_g
    =
    -
    \int_M d\varphi(v_t)\,\rho_t\,d\mathrm{vol}_g
    =
    -
    \int_M d\varphi(v_t)\,d\mu_t.
\]
Therefore the weak form becomes,
\[
    \frac{d}{dt}\int_M \varphi\,d\mu_t
    =
    \int_M d\varphi(v_t)\,d\mu_t.
\]
This is precisely the identity derived above. 
Hence the curve of measures \((\mu_t)_{t\in[0,1]}\), together with the
Eulerian velocity field \(v_t\), satisfies the Riemannian continuity equation,
\[
    \partial_t\mu_t+\operatorname{div}_g(\mu_t v_t)=0,
\]
in the weak sense.
\item[(ii)]
Define,
\[
    \nu_t := (\phi_t)_\#\mu_0 .
\]
We claim that \((\nu_t)\) also satisfies the Riemannian continuity equation
with velocity field \(v_t\) and initial datum \(\mu_0\). Fix again
\(\varphi\in C_c^\infty(M)\). By definition of pushforward,
\[
    \int_M \varphi(x)\,d\nu_t(x)
    =
    \int_M \varphi(\phi_t(y))\,d\mu_0(y).
\]
Differentiate in \(t\). Since \(t\mapsto \phi_t(y)\) solves the ODE,
\[
    \frac{d}{dt}\phi_t(y)=v_t(\phi_t(y)),
\]
the chain rule on the manifold gives,
\[
    \frac{d}{dt}\varphi(\phi_t(y))
    =
    d\varphi_{\phi_t(y)}\bigl(v_t(\phi_t(y))\bigr),
\]
Under the assumed well-posedness and regularity of the flow, differentiation
under the integral sign is justified. Hence, for almost every \(t\),
\[
    \frac{d}{dt}
    \int_M \varphi(x)\,d\nu_t(x)
    =
    \int_M
    d\varphi_{\phi_t(y)}
    \bigl(v_t(\phi_t(y))\bigr)
    \,d\mu_0(y).
\]
Equivalently, pushing forward by \(\phi_t\), this becomes,
\[
    \frac{d}{dt}
    \int_M \varphi(x)\,d\nu_t(x)
    =
    \int_M
    d\varphi_x(v_t(x))
    \,d\nu_t(x).
\]
Thus \((\nu_t)\) is a weak solution of the Riemannian continuity equation,
\[
    \partial_t\nu_t+\operatorname{div}_g(\nu_t v_t)=0.
\]
At time \(t=0\), since \(\phi_0=\mathrm{Id}\), we have,
\[
    \nu_0
    =
    (\phi_0)_\#\mu_0
    =
    \mu_0.
\]
By part (i), the family \((\mu_t)\) is also a weak solution of the same
continuity equation with the same initial datum \(\mu_0\). By the assumed
uniqueness of weak solutions in the class under consideration, it follows that,
\[
    \mu_t
    =
    \nu_t
    =
    (\phi_t)_\#\mu_0
    \qquad
    \text{for all } t\in[0,1].
\]
Evaluating at \(t=1\), we obtain,
\[
    (\phi_1)_\#\mu_0
    =
    \mu_1.
\]
Thus the flow \(\phi_1\) transports \(\mu_0\) to \(\mu_1\). This proves part
(ii).
\end{enumerate}
\end{proof}
\begin{cor*}[Restatement of Corollary
\ref{cor:push}]
Let \((M,g)\) be a Riemannian manifold, and let
\(X_1,\ldots,X_k\in\mathfrak{X}(M)\) be smooth vector fields satisfying the
Chow--Rashevskii bracket-generating condition. Assume that the hypotheses of Theorem~\ref{thm:general-matching} hold for the
class of path bridges associated with \(X_1,\ldots,X_k\). There exist measurable
feedback control laws \(u_i(t,x)\), \(1\le i\le k\), such that the associated
dynamical system,
\[
    \dot{x}(t)
    =
    \sum_{i=1}^k u_i(t,x(t))\,X_i(x(t)),
\]
generates a flow \(\phi_t\) satisfying,
\[
    (\phi_1)_\#\mu_0=\mu_1.
\]
In particular, the initial measure \(\mu_0\) can be transported to the target
measure \(\mu_1\) by feedback controls depending only on \((t,x)\), rather
than explicitly on the initial point.
\end{cor*}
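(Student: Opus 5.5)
The plan is to obtain the corollary directly from Theorem~\ref{thm:general-matching}, applied to the class of path bridges of Remark~\ref{rmk:path-bridge-chow-condition}. The only extra work is to show that the Eulerian velocity $v_t$ produced by conditional averaging takes values in the distribution $\Delta=\operatorname{span}\{X_1,\dots,X_k\}$. We then extract measurable coefficients $u_i(t,x)$ with $v_t(x)=\sum_i u_i(t,x)X_i(x)$.

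\emph{Step 1: the bridges.} Take $\mathcal F$ to be the absolutely continuous curves with $\dot\gamma(t)\in\Delta_{\gamma(t)}$ for almost every $t$. By Theorem~\ref{thm:chow}, endpoint connectivity holds when $M$ is connected; measurable selection $\Gamma$ is part of the assumed hypotheses of Theorem~\ref{thm:general-matching}. Set $Z_t=\Gamma(Z_0,Z_1)(t)$ with $(Z_0,Z_1)\sim\pi$. Then, almost surely and for almost every $t$, we have $\dot Z_t\in\Delta_{Z_t}$.

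\emph{Step 2: $v_t(x)\in\Delta_x$.} This is the main obstacle, because $\Delta_x$ depends on $x$ and may have varying rank, so "conditional expectation of a vector in a subspace" needs care. The approach is to write $\dot Z_t=\sum_i w_i(t)X_i(Z_t)$ with measurable random coefficients $w_i$. To do this, choose at each point the minimal-norm solution via the Moore--Penrose pseudoinverse of the matrix $A(x)=[X_1(x),\dots,X_k(x)]$ in a local frame; this map is Borel in $(x,\xi)$. We then need $\mathbb E|w_i(t)|<\infty$. If this is not automatic, strengthen the integrability assumption to $\mathbb E\int_0^1|w(t)|\,dt<\infty$; a locally uniform rank condition gives this from assumption (1).

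Since $X_i(Z_t)$ is $\sigma(Z_t)$-measurable, we get
\[
\mathbb E[\dot Z_t\mid Z_t]=\sum_i \mathbb E[w_i(t)\mid Z_t]\,X_i(Z_t).
\]
Define $u_i(t,x)$ as a Borel version of $\mathbb E[w_i(t)\mid Z_t=x]$, which exists by Doob--Dynkin. Then $v_t(x)=\sum_i u_i(t,x)X_i(x)$ for $\mu_t$-almost every $x$ and almost every $t$. If the Borel field $v$ of assumption (2) is prescribed, redefine $u$ on the null set (for instance by the pseudoinverse of $v$, when $v\in\Delta$) so the identity holds everywhere relevant. An alternative, avoiding coefficients, is to note that the fiberwise linear map $\xi\mapsto\xi-A(x)A(x)^{+}\xi$ annihilates $\dot Z_t$ and commutes with conditioning on $Z_t$. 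Hence $v_t(Z_t)\in\Delta_{Z_t}$ almost surely, and one sets $u=A^{+}v$.

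\emph{Step 3: conclusion.} The system $\dot x=\sum_i u_i(t,x)X_i(x)$ has exactly the right-hand side $v_t(x)$ from Theorem~\ref{thm:general-matching}. By the assumed regularity, it generates the flow $\phi_t$. Part~(ii) of that theorem then gives $(\phi_t)_\#\mu_0=\mu_t$, and in particular $(\phi_1)_\#\mu_0=\mu_1$. The controls depend only on $(t,x)$ by construction, since they come from conditioning on $Z_t$ and not on $Z_0$.
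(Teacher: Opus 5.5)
Your proposal is correct and follows the same route as the paper. Both write $\dot Z_t=\sum_i w_i(t)X_i(Z_t)$ with pathwise coefficients, pull the $\sigma(Z_t)$-measurable factors $X_i(Z_t)$ out of the conditional expectation, set $u_i(t,x)=\mathbb{E}[w_i(t)\mid Z_t=x]$, and conclude with part (ii) of Theorem~\ref{thm:general-matching}. Your additions (the pseudoinverse selection of coefficients, the integrability caveat on $w_i$, and the alternative $u=A^{+}v$ via the projection onto $\Delta_x$) address measurability, integrability and null-set issues that the paper's proof leaves implicit.
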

\begin{proof}
Let \((Z_0,Z_1)\sim\pi\), and let,
\[
    Z_t=\gamma_t^{Z_0,Z_1},
\]
be the random admissible bridge associated with the admissible path class
generated by \(X_1,\ldots,X_k\). Since the bridges are admissible, for almost
every \(t\in[0,1]\) there exist measurable pathwise controls,
\[
    \alpha_i^{Z_0,Z_1}(t),\qquad 1\le i\le k,
\]
such that,
\[
    \dot Z_t
    =
    \sum_{i=1}^k
    \alpha_i^{Z_0,Z_1}(t)X_i(Z_t).
\]
By Theorem~\ref{thm:general-matching}, the induced Eulerian velocity field is,
\[
    v_t(x)=\mathbb E[\dot Z_t\mid Z_t=x].
\]
Using the admissible representation of \(\dot Z_t\), we obtain,
\[
    v_t(x)
    =
    \mathbb E\left[
        \sum_{i=1}^k
        \alpha_i^{Z_0,Z_1}(t)X_i(Z_t)
        \,\middle|\,
        Z_t=x
    \right].
\]
Since \(X_i(Z_t)=X_i(x)\) after conditioning on \(Z_t=x\), this gives,
\[
    v_t(x)
    =
    \sum_{i=1}^k
    \mathbb E\left[
        \alpha_i^{Z_0,Z_1}(t)
        \,\middle|\,
        Z_t=x
    \right]X_i(x).
\]
Define the feedback controls,
\[
    u_i(t,x)
    :=
    \mathbb E\left[
        \alpha_i^{Z_0,Z_1}(t)
        \,\middle|\,
        Z_t=x
    \right].
\]
Then,
\[
    v_t(x)
    =
    \sum_{i=1}^k u_i(t,x)X_i(x).
\]
Therefore the ODE,
\[
    \dot x(t)=v_t(x(t)),
\]
can be written as the feedback-controlled system,
\[
    \dot x(t)
    =
    \sum_{i=1}^k u_i(t,x(t))X_i(x(t)).
\]
By Theorem~\ref{thm:general-matching}, under the stated regularity assumptions,
the time-dependent vector field \(v_t\) generates a flow \(\phi_t\) satisfying,
\[
    \mu_t=(\phi_t)_\#\mu_0.
\]
Evaluating at \(t=1\), and using \(\mu_1=\mathrm{Law}(Z_1)\), we obtain,
\[
    (\phi_1)_\#\mu_0=\mu_1.
\]
Thus the initial measure \(\mu_0\) is transported to the target measure
\(\mu_1\) by feedback controls depending only on \((t,x)\).
\end{proof}

\section{Experimental Details}
\label{sec:exper}

This section provides additional details about the synthetic datasets used in the experiments presented in this note.

\subsection{3D Moon Dataset.} 
We construct the synthetic \textit{3D moon} dataset by extending the classic 2D two-moons distribution into three dimensions. The first ``moon'' lies on the $x$--$y$ plane as a semicircular arc of unit radius centered at the origin, with $z = 0$. The second ``moon'' is placed in the parallel plane $z = 1$, translated along the $y$-axis and horizontally flipped to form a complementary arc. Gaussian noise with standard deviation $0.1$ is added to each point to introduce variability. The resulting dataset exhibits a simple yet nontrivial topology in $\mathbb{R}^3$, requiring models to capture both geometric structure and local noise.

\subsection{Multi-Component Gaussian Mixture.} 
We generate a synthetic multimodal distribution $\mu_1$ in $\mathbb{R}^D$ using a mixture of three anisotropic Gaussian components. Each component is defined by a distinct mean vector in the first three dimensions---specifically, $[6,3, 3]$, $[-2, -3, -2]$, and $[0, 0, 5]$---with the remaining dimensions set to zero. Each component uses a fixed covariance vector. The batch is split approximately evenly across components, and samples are generated by scaling standard Gaussian noise with the specified variances and shifting by the corresponding means. The samples are then shuffled to eliminate ordering artifacts. This setup results in a structured, high-dimensional distribution with multiple modes, providing a challenging testbed for generative models.

\subsection{3D Torus Dataset.} 
We construct a synthetic dataset by sampling points from a 3D torus embedded in $\mathbb{R}^3$, defined by a major radius $R = 3.0$ and a minor radius $r = 0.75$. For each sample, two angles $\theta, \phi \sim \mathcal{U}(0, 2\pi)$ are drawn, and the point is computed using the parametric equations:
\[
x = (R + r \cos\phi)\cos\theta,\quad 
y = (R + r \cos\phi)\sin\theta,\quad 
z = r \sin\phi.
\]
To simulate realistic conditions, Gaussian noise with standard deviation $0.07$ is independently added to each coordinate. The resulting point cloud lies on a tubular manifold, offering a challenging geometry with nontrivial curvature and topology.
\subsection{Control parameterization.}
The scalar controls \(a_i\) are parameterized by multilayer perceptrons. Each scalar control network is an MLP with three hidden layers of width \(128\)
and SiLU activations. We train for \(3000\) iterations with minibatch size
\(B=512\).
\subsection{Numerical integration.}
For the CNF-style experiments, the controls are Markovian functions of time and
state, \(a_i=a_i(t,x;\theta)\), and the induced velocity field is,
\[
    v_\theta(t,x)
    =
    \sum_{i=1}^k a_i(t,x;\theta)V_i(x).
\]
We simulate the dynamics over the time interval \([0,1]\). During training, for
each data sample \(x_1\sim \mu_1\), we integrate the controlled ODE backward
from \(t=1\) to \(t=0\),
\[
    \dot x_t = v_\theta(t,x_t),
    \qquad x_{t=1}=x_1.
\]
In the implementation, we use a fixed-step Runge--Kutta method with \(K\) steps.
For sampling, we draw \(z\sim p_0=\mathcal N(0,I)\) and integrate the same ODE
forward from \(t=0\) to \(t=1\) to obtain generated samples.
This is the discretization used in Algorithm~\ref{alg:cnf-chowflow}.

\subsection{Training objective.}
Training is performed using the continuous normalizing flow likelihood objective.
Along each trajectory, the log-density evolves according to the instantaneous
change-of-variables formula,
\[
    \frac{d}{dt}\log p_t(x_t)
    =
    -\mathrm{div} (v_\theta(t,x_t)).
\]
Given a minibatch of data samples \(\{x_1^{(j)}\}_{j=1}^B\sim\mu_1\), we
integrate each sample backward to obtain a latent point \(z^{(j)}=x_0^{(j)}\)
and simultaneously accumulate the divergence term. The model log-likelihood is
computed as,
\[
    \log p_\theta(x_1^{(j)})
    =
    \log p_0(z^{(j)})
    -
    \int_0^1
    \mathrm{div}(v_\theta(t,x_t^{(j)}))\,dt,
\]
where \(p_0=\mathcal N(0,I)\). We minimize the negative log-likelihood,
\[
    \mathcal L_{\mathrm{CNF}}(\theta)
    =
    -
    \frac1B
    \sum_{j=1}^B
    \log p_\theta(x_1^{(j)}).
\]
In the low-dimensional synthetic experiments, the divergence is
computed exactly using automatic differentiation. The training loss curves for the Gaussian-mixture and torus experiments are shown in Figure \ref{fig:synthetic-training-curves}.

\subsection{Optimization.}
\begin{figure}[t]
    \centering
    \includegraphics[width=0.45\linewidth]{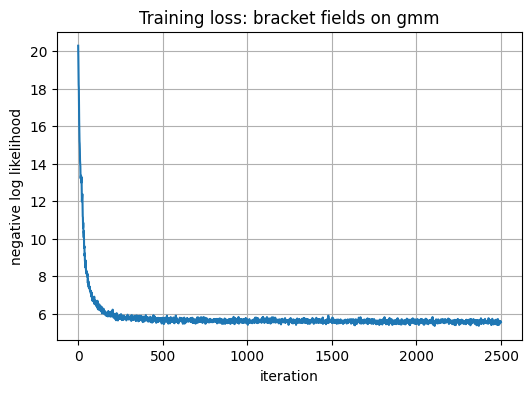}
    \includegraphics[width=0.45\linewidth]{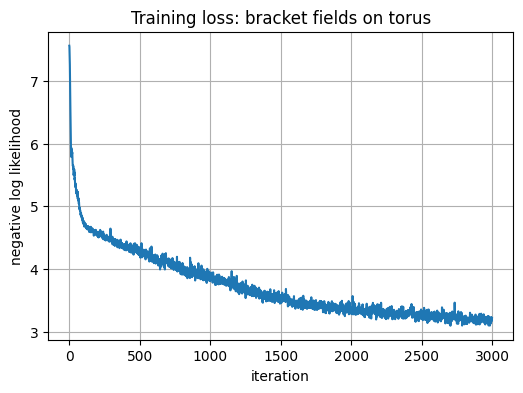}
    \caption{Negative log-likelihood training curves for the synthetic
    Gaussian-mixture and torus experiments.}
    \label{fig:synthetic-training-curves}
\end{figure}
We train using Adam with learning rate \(10^{-3}\). Gradients are backpropagated
through the discretized ODE solver and the accumulated divergence term. We clip
the gradient norm at \(10\) to improve numerical stability. We use a fixed integration horizon \(T=1\), \(K=16\) Runge--Kutta
steps during training, and a larger number of steps during sampling for improved
sample quality.

\clearpage  
\bibliographystyle{amsalpha} 
\bibliography{name.bib}
\end{document}